\definecolor{Gray}{gray}{.25}
\newtheorem{lemma}{Lemma}
\begin{document}
\vspace*{0.35in}

\begin{flushleft}
{\Large
\textbf\newline{ClusterGAN : Latent Space Clustering in \\ Generative Adversarial Networks}
}
\newline
\\
Sudipto Mukherjee\textsuperscript{1},
Himanshu Asnani\textsuperscript{1},
Eugene Lin\textsuperscript{1},
Sreeram Kannan\textsuperscript{1},
\\
\bigskip
\bf{1} University of Washington, Seattle.
\\
\bigskip
 \{sudipm, asnani, lines, ksreeram\}@uw.edu

\end{flushleft}

\section*{Abstract}
Generative Adversarial networks (GANs) have obtained remarkable success in many unsupervised learning tasks and unarguably, clustering is an important unsupervised learning problem. While one can potentially exploit the latent-space back-projection in GANs to cluster, we demonstrate that the cluster structure is not retained in the GAN latent space. 
	In this paper, we propose ClusterGAN as a new mechanism for clustering using GANs. By sampling latent variables from a mixture of one-hot encoded variables and continuous latent variables, coupled with an inverse network (which projects the data to the latent space) trained jointly with a clustering specific loss, we are able to achieve clustering in the latent space. Our results show a remarkable phenomenon that GANs can preserve latent space interpolation across categories, even though the discriminator is never exposed to such vectors. We compare our results with various clustering baselines and demonstrate superior performance on both synthetic and real datasets.


\section{Introduction}

\subsection{Motivation}

\noindent Representation learning enables machine learning models to decipher underlying semantics in data and disentangle hidden factors of variation. These powerful representations have made it possible to transfer knowledge across various tasks. But what makes one representation better than another ? \cite{bengio} mentioned several general-purpose priors that are not dependent on the downstream task, but appear as commonalities in good representations. One of the general-purpose priors of representation learning that is ubiquitous across data intensive domains is clustering. Clustering has been extensively studied in unsupervised learning with multifarious approaches seeking efficient algorithms \cite{ngspec}, problem specific distance metrics \cite{xiang}, validation \cite{halkidi} and the like. Even though the main focus of clustering has been to separate out the original data into classes, it would be even nicer if such clustering was obtained along with dimensionality reduction where the real data actually seems to come from a lower dimensional manifold. 

In recent times, much of unsupervised learning is driven by deep generative approaches, the two most prominent being Variational Autoencoder (VAE) \cite{kingma} and Generative Adversarial Network (GAN) \cite{goodfellow}. The popularity of generative models themselves is hinged upon the ability of these models to capture high dimensional probability distributions, imputation of missing data and dealing with multimodal outputs. Both GAN and VAE aim to match the real data distribution (VAE using an explicit approximation of maximum likelihood and GAN through implicit sampling), and simulataneously provide a mapping from a latent space $\mathcal{Z}$ to the input space $\mathcal{X}$. 
The latent space of GANs not only provides dimensionality reduction, but also gives rise to novel applications. Perturbations in the latent space could be used to determine adversarial examples that further help build robust classifiers \cite{ilyas}. Compressed sensing using GANs \cite{bora} relies on finding a latent vector that minimizes the reconstruction error for the measurements. Generative compression is yet another application involving $\mathcal{Z}$ \cite{santurkar}. One of the most fascinating outcomes of the GAN training is the interpolation in the latent space. Simple vector arithmetic properties emerge which when manipulated lead to changes in the semantic qualities of the generated images \cite{radford}. This differentiates GANs from traditional dimensionality reduction techniques \cite{mika} \cite{maaten} which lack interpretability. One potential application that demands such a property is clustering of cell types in genomics. GANs provide a means to understand the change in high-dimensional gene expression as one traverses from one cell type (i.e., cluster) to another in the latent space. Here, it is critical to have both clustering as well as good interpretability and interpolation ability. This brings us to the principal motivation of this work: \textbf{\textit{Can we design a GAN training methodology that clusters in the latent space?}}  

\subsection{Related Works}

Deep learning approaches have been used for dimensionality reduction starting with variants of the autoencoder such as the stacked denoising autoencoders \cite{vincent}, sparse autoencoder \cite{coates} and deep CCA \cite{andrew}. Architectures for deep unsupervised subspace clustering have also been built on the  encoder-decoder framework \cite{ji}. Recent works have addressed this problem of joint clustering and dimensionality reduction in autoencoders. \cite{xie} solved this problem by initializing the cluster centroids and the embedding with a stacked autoencoder. Then they use alternating optimization to improve the clustering and report state-of-the-art results in both clustering accuracy and speed on real datasets. The clustering algorithm is referred to as DEC in their paper.  Since K-means is often the most widely used algorithm for clustering, \cite{yang} improved upon DEC by introducing a modified cost function that incorporates the K-means loss. They optimized the non-convex objective using alternating SGD to obtain an embedding that is amenable to K-means clustering. Their algorithm DCN was shown to outperform all standard clustering methods on a range of datasets. It is interesting to note that the vanilla autoencoder by itself did not explicitly have any clustering objective. But it could be improved to achieve this end by careful algorithmic design. Since GANs have outperformed autoencoders in generating high fidelty samples, we had a strong intuition in favour of the powerful latent representations of GAN providing improved clustering performance also. 

\begin{figure}
\includegraphics[width=1.00\textwidth]{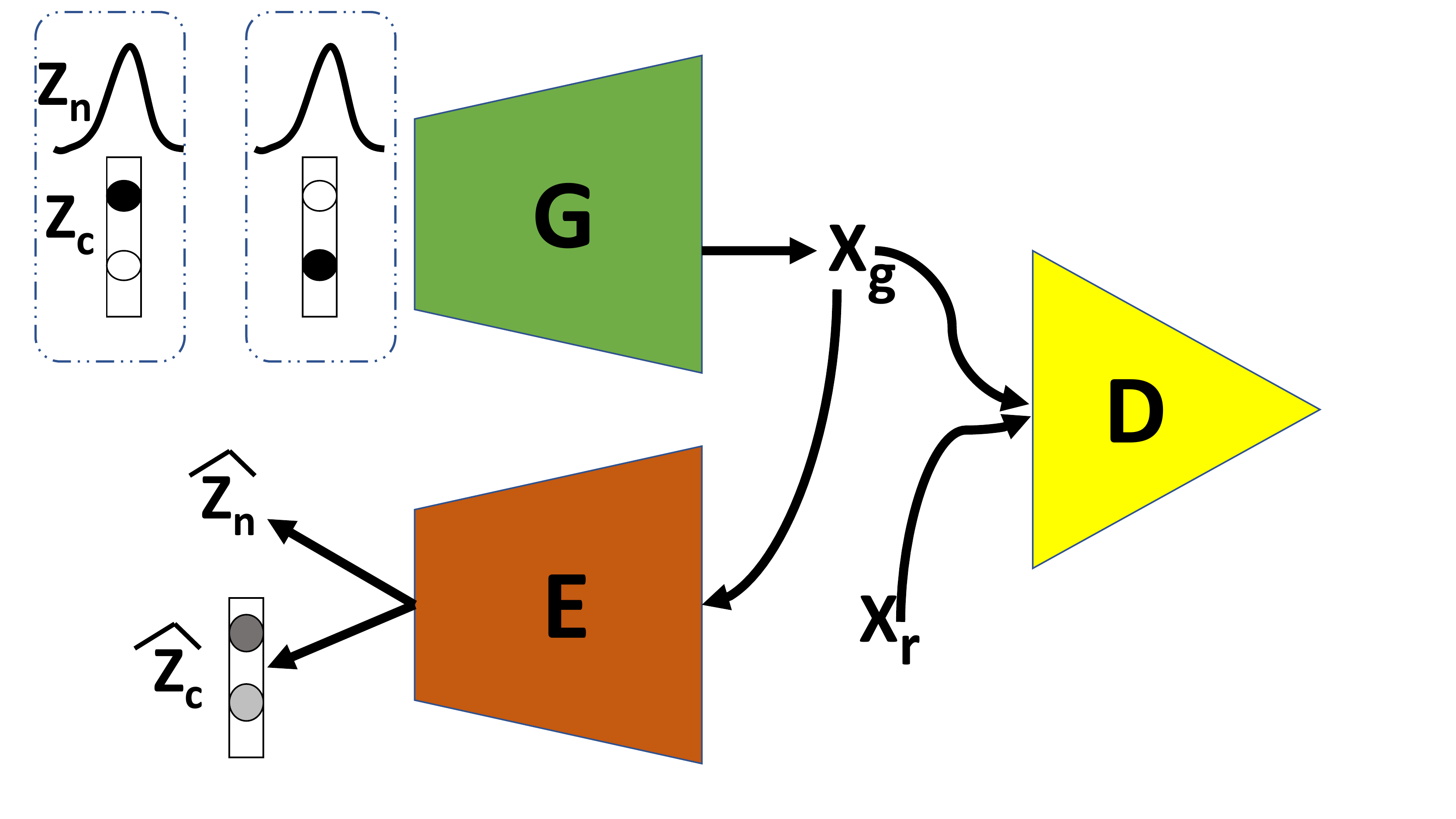} 
\caption{ClusterGAN Architecture}
\label{arch}
\end{figure}

Interpretable representation learning in the latent space has been investigated for GANs in the seminal work of \cite{infogan}. The authors trained a GAN with an additional term in the loss that seeks to maximize the mutual information between a subset of the generator's noise variables and the generated output. The key goal of InfoGAN is to create interpretable and disentangled latent variables. While InfoGAN does employ discrete latent variables, it is not specifically designed for clustering. In this paper, we show that our proposed architecture is superior to InfoGAN for clustering. The other prominent family of generative models, VAE, has the additional advantage of having an inference network, the encoder, which is jointly learnt during training. This enables mapping from $\mathcal{X}$ to $\mathcal{Z}$ that could potentially preserve cluster structure by suitable algorithmic design. Unfortunately, no such inference mechanism exists in GANs, let alone the possibility of clustering in the latent space.To bridge the gap between VAE and GAN, various methods such as Adversarially Learned Inference (ALI) \cite{dumoulin}, Bidirectional Generative Adversarial Networks (BiGAN) \cite{donahue} have introduced an inference network which is trained to match the joint distributions  of $(x, z)$ learnt by the encoder $\mathcal{E}$ and decoder $\mathcal{G}$ networks. Typically, the reconstruction in ALI/BiGAN is poor as there is no deterministic pointwise matching between $x$ and $\mathcal{G}(\mathcal{E}(x))$ involved in the training. Architectures such as Wasserstein Autoencoder \cite{tolstikhin}, Adversarial Autoencoder \cite{makhzani},  which depart from the traditional GAN framework,  also have an encoder as part of the network. So this led us to consider a formulation using an Encoder which could \textit{both reduce the cycle loss as well as aid in clustering.} 

\subsection{Main Contributions}
To the best of our knowledge, this is the first work that addresses the problem of clustering in the latent space of GAN. The main contributions of the paper can be summarized as follows:

\begin{itemize}
\item[$\bullet$] We show that even though the GAN latent variable preserves information about the observed data, the latent points are smoothly scattered based on the latent distribution leading to no observable clusters. 
\item[$\bullet$] We propose three main algorithmic ideas in ClusterGAN in order to remedy this situation. 

\begin{enumerate}
\item We utilize a \textbf{mixture of discrete and continuous} latent variables in order to create a non-smooth geometry in the latent space. 
\item We propose a \textbf{novel backpropogation algorithm} accommodating the discrete-continuous mixture, as well as an \textbf{explicit inverse-mapping network} to obtain the latent variables given the data points, since the problem is non-convex. 
\item We propose to jointly train the GAN along with the inverse-mapping network with \textbf{a clustering-specific loss} so that the distance geometry in the projected space reflects the distance-geometry of the variables. 
\end{enumerate}

\item[$\bullet$] We compare ClusterGAN and other possible GAN based clustering algorithms, such as InfoGAN, along with multiple clustering baselines on varied datasets. This demonstrates the superior performance of ClusterGAN  for the clustering task. 
\item[$\bullet$] We demonstrate that ClusterGAN surprisingly retains good interpolation across the different classes (encoded using one-hot latent variables), even though the discriminator is never exposed to such samples.
\end{itemize}
The formulation is general enough to provide a \textit{meta} framework that incorporates the additional desirable property of clustering in GAN training.  

\begin{figure}
\begin{subfigure}[b]{0.50\textwidth}
\includegraphics[width=\textwidth]{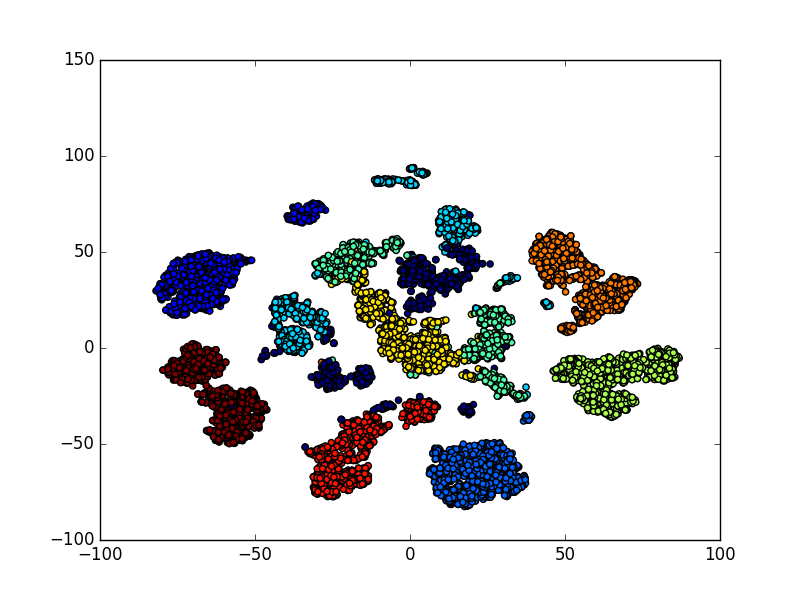} 
\caption{Non-linear generator with $z \sim \mathcal{N}(0, I)$}
\label{non-line}
\end{subfigure}
\begin{subfigure}[b]{0.50\textwidth}
\includegraphics[width=\textwidth]{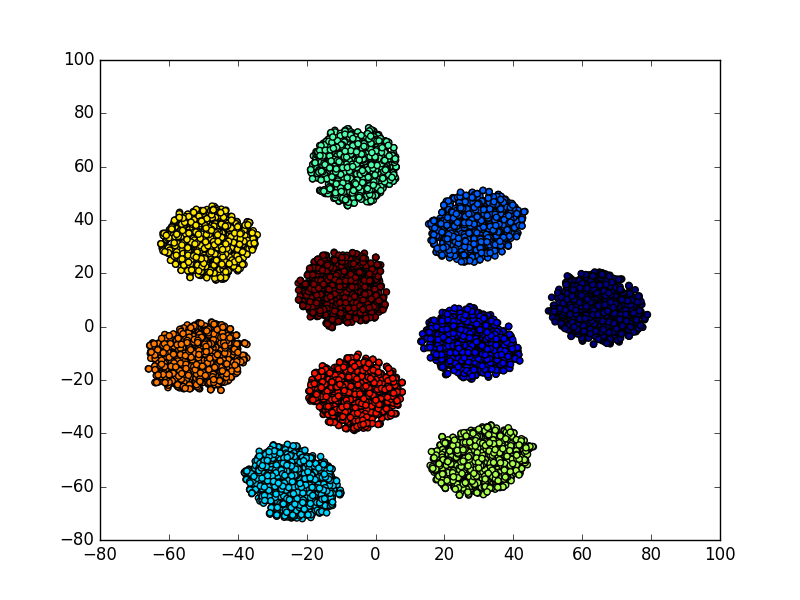} 
\caption{Linear generator with $z$ one-hot encoded}
\label{lin}
\end{subfigure}
\caption{TSNE visualization of latent vectors. Linear
Generator recovers clusters, suggesting that representation
power is not a bottleneck.}
\label{lin-plot}
\end{figure} 

\section{Discrete-Continuous Prior}

\subsection{Background}
Generative adversarial networks consist of two components, the generator $\mathcal{G}$ and the discriminator $\mathcal{D}$. Both $\mathcal{G}$ and $\mathcal{D}$ are usually implemented as neural networks parameterized by $\Theta_{G}$ and $\Theta_{D}$ respectively. The generator can also be considered to be a mapping from latent space to the data space which we denote as $\mathcal{G} : \mathcal{Z} \mapsto \mathcal{X}$. The discrimator defines a mapping from the data space to a real value which can correspond to the probability of the sample being real, $\mathcal{D} : \mathcal{X} \mapsto \mathbb{R} $. The GAN training sets up a two player game between $\mathcal{G}$ and $\mathcal{D}$, which is defined by the minimax objective : $ \min_{\Theta_G} \max_{\Theta_{D}} \mathop{\mathbf{E}}_{x \sim \mathbb{P}^r_x} q(\mathcal{D}(x)) + \mathop{\mathbf{E}}_{z \sim \mathbb{P}_z} q(1-\mathcal{D}(\mathcal{G}(z)))$, where $\mathbb{P}^r_x$ is the distribution of real data samples, $\mathbb{P}_z$ is the prior noise distribution on the latent space and $q(.)$ is the quality function. For vanilla GAN, $q(x) = \log x$, and for Wasserstein GAN (WGAN) $q(x) = x$.  We also denote the distribution of generated samples $x_g$ as $\mathbb{P}^g_x$. The discriminator and the generator are optimized alternatively so that at the end of training $\mathbb{P}^g_x$ matches $\mathbb{P}^r_x$. 

\subsection{Vanilla GAN does not cluster well in the latent space}
One possible way to cluster using a GAN is to back-propagate the data into the latent space (using back-propogation decoding \cite{lipton}) and cluster the latent space. However, this method usually leads to very bad results (see Fig. \ref{recon-mnist} for clustering results on MNIST). The key reason is that, if indeed, back-propagation succeeds, then the back-projected data distribution should look similar to the latent space distribution, which is typically chosen to be a Gaussian or uniform distribution, and we cannot expect to cluster in that space. Thus even though the latent space may contain full information about the data, the distance geometry in the latent space does not reflect the inherent clustering. In \cite{deligan}, the authors sampled from a Gaussian mixture prior and obtained diverse samples even in limited data regimes. However, even GANs with a Gaussian mixture failed to cluster, as shown in \ref{recon-mnist}(c). As observed by the authors of DeLiGAN, Gaussian components tend to ‘crowd’ and become redundant. Lifting the space using categorical variables could solve the problem
effectively. But continuity in latent space is traditionally viewed to be a pre-requisite for the objective of good interpolation. In other words, interpolation seems to be at loggerheads with the clustering objective. We demonstrate in this paper how ClusterGAN can obtain \textbf{good interpolation and good clustering} simultaneously.  

\begin{figure}
\begin{subfigure}[b]{0.50\textwidth}
\includegraphics[width=\textwidth]{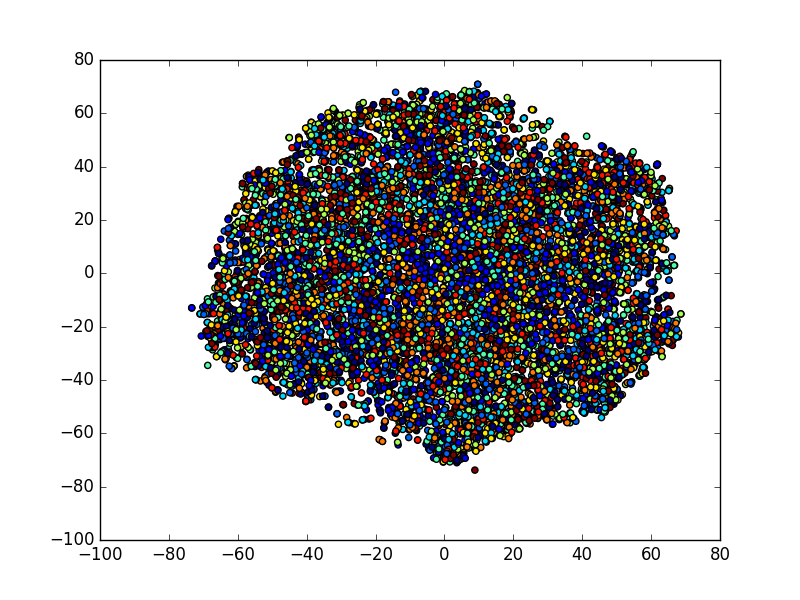} 
\caption{$z \sim $ Uniform}
\label{uniform}
\end{subfigure}
\begin{subfigure}[b]{0.50\textwidth}
\includegraphics[width=\textwidth]{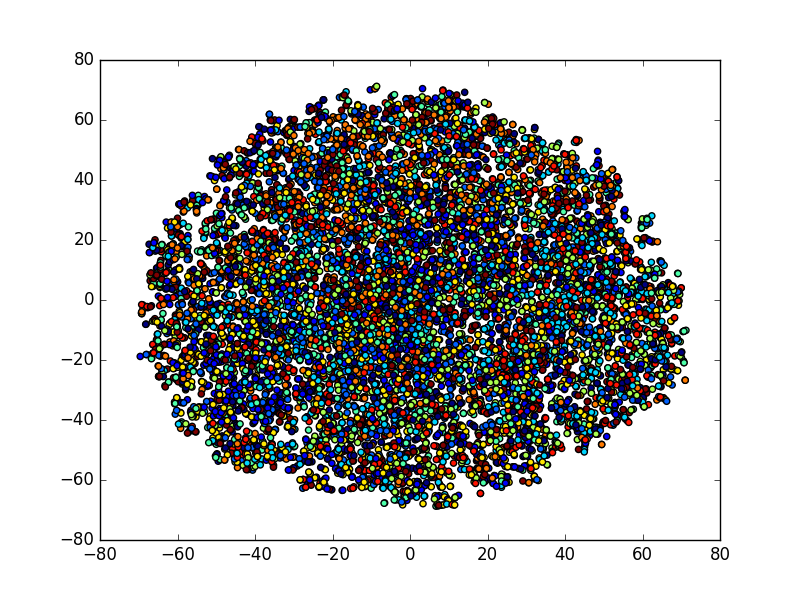} 
\caption{$z \sim $ Normal}
\label{normal}
\end{subfigure}
\begin{subfigure}[b]{0.50\textwidth}
\includegraphics[width=\textwidth]{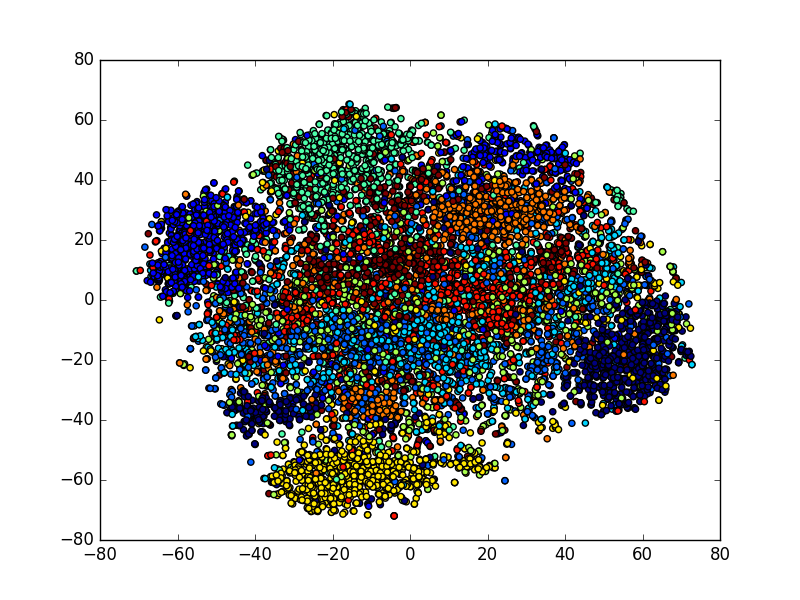} 
\caption{$z \sim $ Gaussian Mix}
\label{mix-gauss}
\end{subfigure}
\begin{subfigure}[b]{0.50\textwidth}
\includegraphics[width=\textwidth]{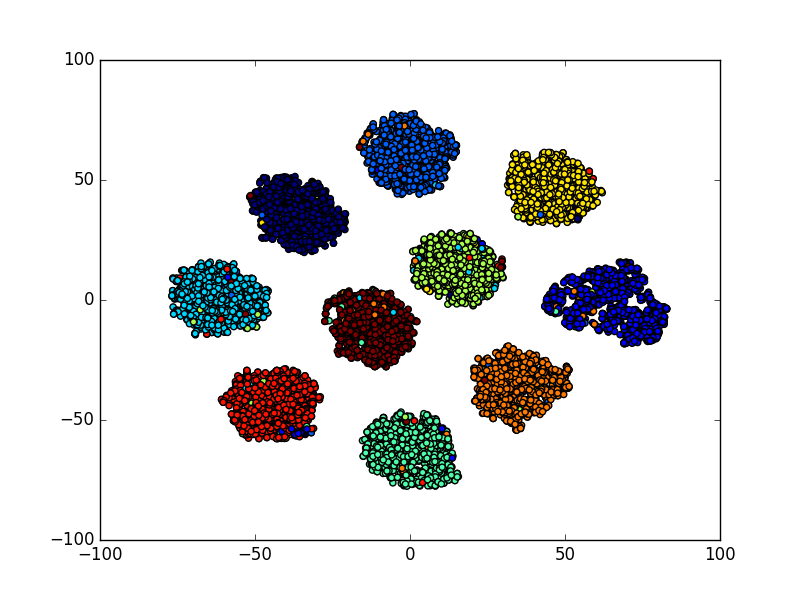} 
\caption{$z \sim (z_n, z_c)$  }
\label{one-hot}
\end{subfigure}
\caption{TSNE visualization of latent vectors for GANs trained with different priors on MNIST.}
\label{recon-mnist}
\end{figure} 

\subsection{Sampling from Discrete-Continuous Mixtures}

In ClusterGAN, we sample from a prior that consists of normal random variables cascaded with one-hot encoded vectors. To be more precise $z = (z_n, z_c), z_n \sim \mathcal{N}(0, \sigma^2 I_{d_n}), z_c = e_k, k \sim \mathcal{U}\{ 1, 2, \ldots, K\}$, $e_k$ is the $k^{th}$ elementary vector in $\mathbb{R}^{K}$ and $K$ is the number of clusters in the data. In addition, we need to choose $\sigma$ in such a way that the one-hot vector provides sufficient signal to the GAN training that leads to each mode only generating samples from a corresponding class in the original data. To be more precise, we chose $\sigma = 0.10$ in all our experiments so that each dimension of the normal latent variables, $z_{n,j} \in (-0.6, 0.6) << 1.0 \, \, \forall j$ with high probability. Small variances $\sigma$ are chosen to ensure the clusters in $\mathcal{Z}$ space are separated. Hence this prior naturally enables us to design an algorithm that clusters in the latent space. 

\subsection{Modified Backpropagation Based Decoding}

Previous works \cite{creswell} \cite{lipton} have explored solving an optimization problem in $z$ to recover the latent vectors, $z^* =  \mathop{\arg \min}_z  \mathcal{L} (\mathcal{G}(z), x) + \lambda \| z \|_{p}$, where $\mathcal{L}$ is some suitable loss function and $\|\cdot\|_p$ denotes the norm. This approach is insufficient for clustering with traditional latent priors even if backpropagation was lossless and recovered accurate latent vectors. To make the situation worse, the  optimization problem above is non-convex in $z$ ($\mathcal{G}$ being implemented as a neural network) and can obtain different embeddings in the $\mathcal{Z}$ space based on initialization. Some of the approaches to address this issue could be multiple restarts with different initialiations to obtain $z^*$, or stochastic clipping of $z$ at each iteration step. None of these lead to clustering, since they do not address the root problem of sampling from separated manifolds in $\mathcal{Z}$.  But our sampling procedure naturally gives way to such an algorithm. We use $\mathcal{L}(\mathcal{G}(z), x) = \| \mathcal{G}(z) - x \|_1$. Since we sample from a normal distrubution, we use the regularizer $\| z_n \|_2^2$, penalizing only the normal variables. We use $K$ restarts, each sampling $z_c$ from a different one-hot component and optimize with respect to only the normal variables, keeping $z_c$  fixed. Adam \cite{adam} is used for the updates during Backprop decoding. Formally, Algorithm \ref{alg:bpd} summarizes the approach.

\begin{algorithm}
\DontPrintSemicolon
\SetAlgoLined
\KwIn{Real sampler $x$, Generator function $\mathcal{G}$, Number of Clusters $K$, Regularization parameter $\lambda$, Adam iterations $\tau$  } 
\KwOut{Latent embedding $z^*$}
\For{$k \in \{ 1, 2, \ldots K \}$}{
Sample $z^0_n \sim \mathcal{N}(0, \sigma^2 I_{d_n})$ \;
Initialization $z_k^0 \gets ( z^0_n, e_k)$ ($e_k$ is $k^{th}$ elementary unit vector in $K$ dimensions)

\For{$t \in \{ 1, 2, \ldots \tau \}$}{
    Obtain the gradient of loss function $ g \gets \nabla_{z_n} \left( \| \mathcal{G}(z_k^{t-1}) - x \|_1 + \lambda \| z_n^{t-1}\|_2 \right)$\;
    Update $z_n^t$ using $g$ with Adam iteration to minimize loss.\;
    Clipping of $z_n^t$, i.e., $z_n^t \gets \mathcal{P}_{[-0.6, 0.6]}(z_n^t)$ \;
    $z_k^t \gets (z_n^t, e_k)$\;
    }
    Update $z^*$ if $z_k^{\tau}$ has lowest loss obtained so far.  
}
\Return{$z^*$}\;
\caption{{\sc Decode\_latent}}
\label{alg:bpd}
\end{algorithm}

\subsection{Linear Generator clusters perfectly}
The following lemma suggests that with discrete-continuous mixtures, we need only linear generation to generate mixture of Gaussians in the generated space.
\begin{lemma}
Clustering with only $z_n$ cannot recover a mixture of gaussian data in the linearly generated space. Further $\exists$ a linear $G(\cdot)$ mapping discrete-continuous mixtures to a mixture of Gaussians.
\end{lemma}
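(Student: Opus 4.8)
The plan is to treat the lemma's two halves separately, since together they form an impossibility result paired with a constructive existence result. For the negative half, I would start from the fact that an affine map of a Gaussian is again Gaussian. If the generator is linear, say $\mathcal{G}(z_n) = W z_n + b$, and we feed it only the continuous prior $z_n \sim \mathcal{N}(0, \sigma^2 I_{d_n})$, then the generated distribution is exactly $\mathcal{N}(b, \sigma^2 W W^\top)$, a \emph{single} Gaussian. To recover a target that is a mixture of $K \ge 2$ distinct Gaussians, this single Gaussian would have to coincide with that mixture. I would rule this out by a moment or characteristic-function argument: a genuine (nondegenerate) mixture is not itself Gaussian --- its characteristic function is a convex combination of Gaussian characteristic functions and is not of Gaussian form, or equivalently the mixture carries nonzero excess kurtosis along the direction separating the means. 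Hence no linear $\mathcal{G}$ applied to $z_n$ alone can match the mixture, and the clusters in data space collapse into one unimodal blob that cannot be recovered.

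For the positive half I would give an explicit construction. Writing the linear generator in block form on $z = (z_n, z_c)$ as $\mathcal{G}(z) = W_n z_n + W_c z_c + b$, and recalling $z_c = e_k$, the $k$-th one-hot input selects the $k$-th column $w_k$ of $W_c$, so that
\[
  \mathcal{G}(z_n, e_k) = W_n z_n + w_k + b.
\]
Conditioned on component $k$, this is a Gaussian with mean $w_k + b$ and covariance $\sigma^2 W_n W_n^\top$; marginalizing over the uniform draw $k \sim \mathcal{U}\{1,\ldots,K\}$ yields
\[
  \mathbb{P}^g_x \;=\; \frac{1}{K}\sum_{k=1}^{K} \mathcal{N}\!\left(w_k + b,\; \sigma^2 W_n W_n^\top\right),
\]
which is precisely a mixture of $K$ Gaussians. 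Choosing the columns $w_k$ to be the desired well-separated means, and picking $W_n$ full-rank to realize the desired shared covariance, completes the construction and establishes existence.

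The main obstacle is the impossibility direction: everything in the constructive half is a routine computation with affine maps, but the negative claim requires carefully pinning down the sense in which a single Gaussian fails to ``recover'' a mixture. The cleanest route is to show a nondegenerate Gaussian mixture simply is not Gaussian (via non-matching higher moments or characteristic functions), which immediately forces the distributional mismatch. A secondary caveat worth flagging is that the construction produces a \emph{homoscedastic} mixture --- all components share the covariance $\sigma^2 W_n W_n^\top$ because $W_n$ does not depend on $k$ --- so a single linear generator can realize mixtures with distinct means but not genuinely distinct per-component covariances. This is consistent with the lemma as stated, which only asks for the existence of \emph{a} mixture of Gaussians.
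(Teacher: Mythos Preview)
Your proposal is correct and follows essentially the same approach as the paper: the negative half uses that an affine image of a Gaussian is Gaussian, and the positive half gives an explicit linear generator whose discrete part selects the component mean. Your treatment is more careful than the paper's very terse proof---the paper simply asserts the linearity property and offers the specific construction $G(z_n,z_c)=z_n+Az_c$ with $A$ a diagonal matrix of means, whereas you supply the distributional-mismatch argument and a more general block-linear form, and your observation about homoscedasticity is a useful caveat the paper omits.
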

\begin{proof}
If latent space has only the continuous part, $z_n\sim \mathcal{N}(0, \sigma^2 I_{d_n})$, then by the linearity property, any linear generation can only produce Gaussian in the generated space. Now we show there exists a $G(\cdot)$ mapping discrete-continuous mixtures to the generate data $X\sim \mathcal{N}(\mu_\omega, \sigma^2 I_{d_n})$, where $\omega \sim \mathcal{U}\{ 1, 2, \ldots, K\}$ ($K$ is the number of mixtures). This is possible if we let $z_n \sim \mathcal{N}(0, \sigma^2 I_{d_n}), z_c = e_k, k \sim \mathcal{U}\{ 1, 2, \ldots, K\}$ and $G(z_n, z_c)=z_n+Az_c$, $A=diag[\mu_1, \cdots, \mu_K]$ being a $K\times K$ diagonal matrix with diagonal entries as the means $\mu_i$.
\end{proof}

To illustrate this lemma, and hence the drawback of traditional priors $\mathbb{P}_z $ for clustering, we performed a simple experiment. The real samples are drawn from a mixture of $10$ Gaussians in $\mathbb{R}^{100}$. The means of the Gaussians are sampled from $\mathcal{U}(-0.3, 0.3)^{100}$ and the variance of each component is fixed at $\sigma = 0.12$. We trained a GAN with $z \sim \mathcal{N}(0, I_{10})$ where the generator is a multi-layer perceptron with two hidden layers of $256$ units each. For comparison, we also trained a GAN with $z$ sampled from one-hot encoded normal vectors, the dimension of categorical variable being $10$. The generator for this GAN consisted of a linear mapping $W \in \mathbb{R}^{100 \times 10}$, such that $x = Wz$. After training, the latent vectors are recovered using Algorithm \ref{alg:bpd} for the linear generator, and $10$ restarts with random initializations for the non-linear generator. Even for this toy setup, the linear generator perfectly clustered the latent vectors (Acc. = 1.0, NMI = 1.0, ARI = 1.0), but the non-linear generator performed poorly (Acc. = 0.73, NMI = 0.75, ARI = 0.60) (Figure \ref{lin-plot}). The situation becomes worse for real datasets such as MNIST when we trained a GAN using latent vectors drawn from uniform, normal or a mixture of Gaussians. None of these configurations succeeded in clustering in the latent space as shown in Figure \ref{recon-mnist}. 

\begin{figure}[h]
\begin{subfigure}[b]{0.24\textwidth}
\includegraphics[width=\textwidth]{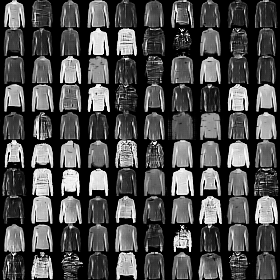} 
\end{subfigure}
\begin{subfigure}[b]{0.24\textwidth}
\includegraphics[width=\textwidth]{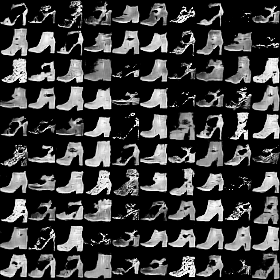} 
\end{subfigure}
\begin{subfigure}[b]{0.24\textwidth}
\includegraphics[width=\textwidth]{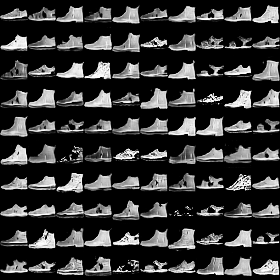} 
\end{subfigure}
\begin{subfigure}[b]{0.24\textwidth}
\includegraphics[width=\textwidth]{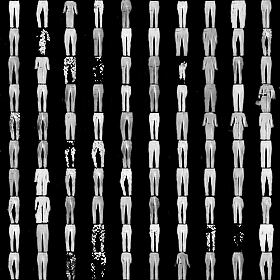} 
\end{subfigure}
\caption{Fashion items generated from distinct modes : Fashion-MNIST}
\label{fashion-mode}
\end{figure} 

\begin{figure*}[h]
\begin{subfigure}[b]{0.24\textwidth}
\includegraphics[width=\textwidth]{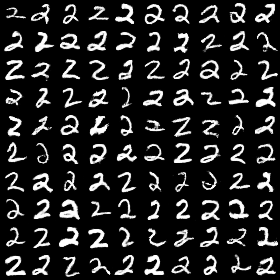} 
\end{subfigure}
\begin{subfigure}[b]{0.24\textwidth}
\includegraphics[width=\textwidth]{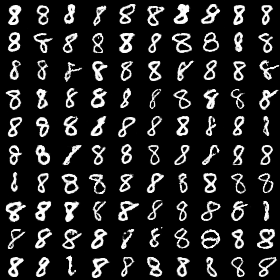} 
\end{subfigure}
\begin{subfigure}[b]{0.24\textwidth}
\includegraphics[width=\textwidth]{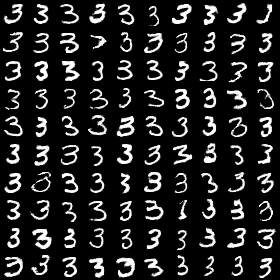} 
\end{subfigure}
\begin{subfigure}[b]{0.24\textwidth}
\includegraphics[width=\textwidth]{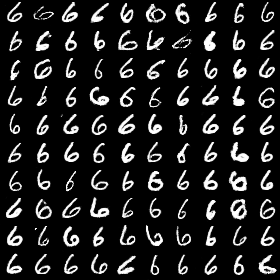} 
\end{subfigure}
\caption{Digits generated from distinct modes : MNIST}
\label{mnist-mode}
\end{figure*}

\subsection{Separate Modes for distinct classes in the data}

It was surprising to find that trained in a purely unsupervised manner without additional loss terms, each one-hot encoded component generated points from a specific class in the original data. For instance, $z = (z_n, e_k)$ generated a particular digit $\pi(k)$ in MNIST, for multiple samplings  of $z_n \sim  \mathcal{N}(0, \sigma^2 I_{d_n})$  ($\pi$ denotes a permutation). This was a necessary first step for the success of Algorithm \ref{alg:bpd}. We also quantitatively evaluated the modes learnt by the GAN by using a supervised classifier for MNIST. The supervised classifier had a test accuracy of $99.2\%$, so it had high reliability of distinguishing the digits. We sample from a mode $k$ and generate a digit $x_g$. It is then classified by the classifier as $\hat{y}$. From this pair $(k, \hat{y})$, we can map each mode to a digit and compute the accuracy of digit $\hat{y}$ being generated from mode $k$. This is denoted as Mode Accuracy. Each digit sample $x_r$  with label $y$ can be decoded in the latent space by Algorithm $\ref{alg:bpd}$ to obtain $z$. Now $z$ can be used to generate $x_g$, which when passed through the classifier gives the label $\hat{y}$. The pair $(y, \hat{y})$ must be equal in the ideal case and this accuracy is denoted as Reconstruction Accuracy. Finally, all the mappings of points in the same class in $\mathcal{X}$ space should have the same one-hot encoding when embedded in $\mathcal{Z}$ space. This defines the Cluster Accuracy. This metholodgy can be extended to quantitatively evaluate mode generation for other datasets also, provided there is a reliable classifier. For MNIST, we obtained Mode Accuracy of $0.97$, Reconstruction Accuracy of $0.96$ and Cluster Accuracy of $0.95$. Some of the modes in Fashion-MNIST and MNIST are shown in Figures \ref{fashion-mode} and \ref{mnist-mode}, respectively. Supplementary materials contain the images from all modes in these two datasets.

\subsection{Interpolation in latent space is preserved}

The latent space in a traditional GAN with Gaussian latent distribution enforces that different classes are continuously scattered in the latent space, allowing nice inter-class interpolation, which is a key strength of GANs. In ClusterGAN, the latent vector $z_c$ is sampled with a one-hot distribution and in order to interpolate across the classes, we will have to sample from a convex combination on the one-hot vector. While these vectors have never been sampled during the training process, we surprisingly observed very smooth inter-class interpolation in ClusterGAN. To demonstrate interpolation, we fixed the $z_n$ in two latent vectors with different $z_c$ components, say $z^{(1)}_{c}$ and $z^{(2)}_{c}$ and interpolated with the one-hot encoded part to give rise to new latent vectors $z = (z_n , \mu z^{(1)}_{c} + (1-\mu) z^{(2)}_{c}) , \mu \in [0, 1]$. As Figure \ref{interpol} illustrates, we observed a nice transition from one digit to another as well as across different classes in FashionMNIST. This demonstrates that ClusterGAN learns a very smooth manifold even on the untrained directions of the discrete-continuous distribution. We also show interpolations from a vanilla GAN trained with Gaussian prior as reference.

\begin{algorithm*}
\DontPrintSemicolon
\SetAlgoLined
\KwIn{Functions $\mathcal{G}$, $\mathcal{D}$ and $\mathcal{E}$, Regularization parameters $\beta_n$, $\beta_c$, learning rate $\eta$,  parameters $\Theta_G^t$, $\Theta_E^t$ }
\KwOut{$\Theta_G^{(t+1)}, \Theta_E^{(t+1)}$}
Sample ${z^{(i)}}_{i=1}^m$ from $ \mathbb{P}^z , z = (z_n, z_c)$ \; 
$g_{\Theta_G} \gets \nabla_{\Theta_G} \left( -\sum\limits_{i=1}^m q(\mathcal{D}(\mathcal{G}(z^{(i)})) + \beta_n \sum\limits_{i=1}^m \| z_n^{(i)} - \mathcal{E}(\mathcal{G}(z_n^{(i)})) \|_2^2  + \beta_c \sum\limits_{i=1}^m \mathcal{H} (z_c^{(i)} ,\mathcal{E}(\mathcal{G}(z_c^{(i)}))) \right) $  \;
$g_{\Theta_E} \gets \nabla_{\Theta_E}  \left( \beta_n \sum\limits_{i=1}^m \| z_n^{(i)} - \mathcal{E}(\mathcal{G}(z_n^{(i)})) \|_2^2  +   \beta_c \sum\limits_{i=1}^m \mathcal{H} (z_c^{(i)} ,\mathcal{E}(\mathcal{G}(z_c^{(i)}))) \right) $ \;
Update $\Theta_G$ using $(g_{\Theta_G}, \Theta_G^t)$ with Adam ; similarly for $\Theta_E$. \;
\Return{$\Theta_G, \Theta_E$}\;
\caption{{\sc Update\_Param}}
\label{alg:update}
\end{algorithm*}

\begin{figure*}[h]
\begin{subfigure}[b]{0.5\textwidth}
\includegraphics[width=\textwidth]{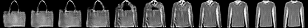} 
\end{subfigure}
\begin{subfigure}[b]{0.5\textwidth}
\includegraphics[width=\textwidth]{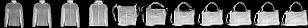} 
\end{subfigure}
\begin{subfigure}[b]{0.5\textwidth}
\includegraphics[width=\textwidth]{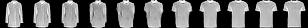} 
\end{subfigure}
\begin{subfigure}[b]{0.5\textwidth}
\includegraphics[width=\textwidth]{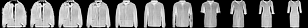} 
\end{subfigure}
\begin{subfigure}[b]{0.5\textwidth}
\includegraphics[width=\textwidth]{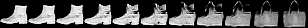} 
\end{subfigure}
\begin{subfigure}[b]{0.5\textwidth}
\includegraphics[width=\textwidth]{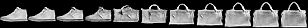} 
\end{subfigure}
\begin{subfigure}[b]{0.5\textwidth}
\includegraphics[width=\textwidth]{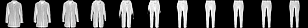} 
\end{subfigure}
\begin{subfigure}[b]{0.5\textwidth}
\includegraphics[width=\textwidth]{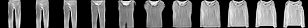} 
\end{subfigure}
\begin{subfigure}[b]{0.5\textwidth}
\includegraphics[width=\textwidth]{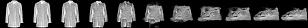} 
\end{subfigure}
\begin{subfigure}[b]{0.5\textwidth}
\includegraphics[width=\textwidth]{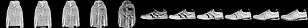} 
\end{subfigure}
\begin{subfigure}[b]{0.5\textwidth}
\includegraphics[width=\textwidth]{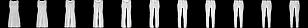} 
\end{subfigure}
\begin{subfigure}[b]{0.5\textwidth}
\includegraphics[width=\textwidth]{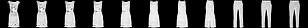} 
\end{subfigure}
\begin{subfigure}[b]{0.5\textwidth}
\includegraphics[width=\textwidth]{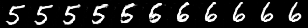} 
\end{subfigure}
\begin{subfigure}[b]{0.5\textwidth}
\includegraphics[width=\textwidth]{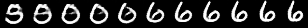} 
\end{subfigure}
\begin{subfigure}[b]{0.5\textwidth}
\includegraphics[width=\textwidth]{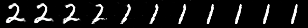} 
\end{subfigure}
\begin{subfigure}[b]{0.5\textwidth}
\includegraphics[width=\textwidth]{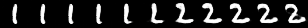} 
\end{subfigure}
\caption{Comparison of Latent Space Interpolation : ClusterGAN (left) and vanilla WGAN (right)}
\label{interpol}
\end{figure*} 

\begin{figure}[ht]
\includegraphics[scale = 0.75]{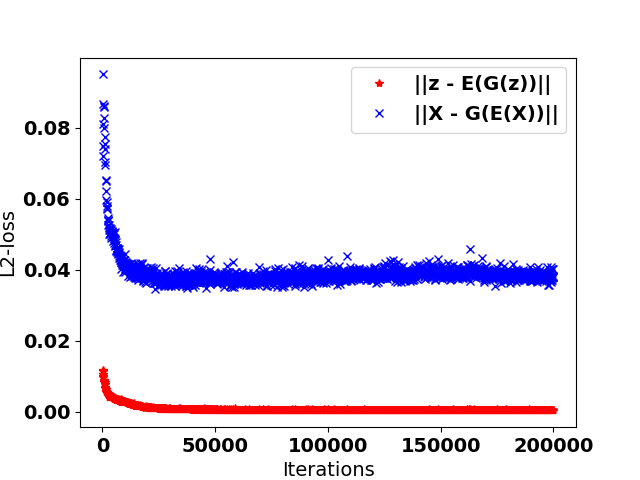} 
\caption{Decrease of Cycle Loss with iterations in MNIST. The mean square L2-distance $|| x - \mathcal{G}(\mathcal{E}(x))||$ was $0.038$ and $|| z - \mathcal{E}(\mathcal{G}(z))||$ was $0.0004$. Mean distances were computed on a test batch not used in training.}
\label{cycle_loss}
\end{figure}

\section{ClusterGAN}

Even though the above approach enables the GAN to cluster in the latent space, it may be able to perform even better if we had a clustering specific loss term in the minimax objective. For MNIST, digit strokes correspond well to the category in the data. But for more complicated datasets, we need to enforce structure in the GAN training. One way to ensure that is to enforce precise recovery of the latent vector. We therefore introduce an encoder $\mathcal{E} : \mathcal{X} \mapsto \mathcal{Z}$, a neural network parameterized by $\Theta_E$. The GAN objective now takes the following form:
\begin{multline}
 \min_{\Theta_G, \Theta_E} \max_{\Theta_{D}} \mathop{\mathbf{E}}_{x \sim \mathbb{P}^r_x} q(\mathcal{D}(x))+\mathop{\mathbf{E}}_{z \sim \mathbb{P}_z} q(1-\mathcal{D}(\mathcal{G}(z))) \\+\beta_n\mathop{\mathbf{E}}_{z \sim \mathbb{P}_z} \| z_n - \mathcal{E}(\mathcal{G}(z_n)) \|_2^2 + \beta_c\mathop{\mathbf{E}}_{z \sim \mathbb{P}_z} \mathcal{H} (z_c ,\mathcal{E}(\mathcal{G}(z_c))
\end{multline}

where $\mathcal{H}(.,.)$ is the cross-entropy loss. The relative magnitudes of the regularization coeficients $\beta_n$ and $\beta_c$ enable a flexible choice to vary the importance of preserving the discrete and continuous portions of the latent code.   One could imagine other variations of the regularization that map $\mathcal{E}(\mathcal{G}(z))$ to be close to the centroid of the respective cluster, for instance $\|\mathcal{E}(\mathcal{G}(z^{(i)})) - \mu^{c(i)}\|_2^2 $, in similar spirit as K-Means. The GAN training in this approach involves jointly updating the parameters of $\Theta_G$ and $\Theta_E$ (Algorithm \ref{alg:update}).

As shown in Figure \ref{cycle_loss}, in our architecture, both $x$ is close to $\mathcal{G}(\mathcal{E}(x))$ and $z$ is close to $\mathcal{E}(\mathcal{G}(z))$. Even though our architecture is optimizing for one type of cycle loss, both losses are small. The loss optimized for is even smaller.

\section{Experiments}

\begin{table}
\centering
\begin{tabular}{ |c|c|c|c|c|  }
 \Xhline{2.0pt}
  \textbf{Dataset} & \textbf{Algorithm} & \textbf{ACC} & \textbf{NMI} & \textbf{ARI}  \\
 \Xhline{2.0pt} 
 \multirow{4}{*}{Synthetic} & ClusterGAN &{\color{blue} 0.99} & {\color{blue}0.99} & {\color{blue}0.99} \\ \cline{2-5}  
 & Info-GAN & 0.88 & 0.75  & 0.74 \\ \cline{2-5}
 & GAN with bp & 0.95 & 0.85  & 0.88 \\ \cline{2-5}
 & GAN with Disc. $\phi$ & 0.99  & 0.98   & 0.98 \\ \cline{2-5}
 & AGGLO. & 0.99 & 0.99  & 0.99  \\ \cline{2-5}
 & NMF  & 0.98 & 0.96 & 0.97 \\\cline{2-5}
 & SC  & 0.99 & 0.98 & 0.98 \\
 \Xhline{2.0pt}
 \multirow{5}{*}{MNIST} & ClusterGAN &{\color{blue} 0.95} & 0.89 & {\color{blue}0.89} \\ \cline{2-5}  
 & Info-GAN & 0.89 & 0.86  & 0.82 \\ \cline{2-5}
 & GAN with bp & 0.95 & {\color{blue}0.90}  & 0.89 \\ \cline{2-5}
 & GAN with Disc. $\phi$ &  0.70 & 0.62  &  0.52\\ \cline{2-5}
 & DCN & 0.83 &  0.81  & 0.75 \\  \cline{2-5}
 & AGGLO. & 0.64 & 0.65  & 0.46  \\ \cline{2-5}
 & NMF  & 0.56 & 0.45 & 0.36 \\
 \Xhline{2.0pt}
 \multirow{4}{*}{Fashion-10} & ClusterGAN & {\color{blue} 0.63} & {\color{blue}0.64} & {\color{blue}0.50} \\ \cline{2-5} 
 & Info-GAN & 0.61 & 0.59  & 0.44 \\ \cline{2-5} 
 & GAN with bp & 0.56 & 0.53  & 0.37 \\ \cline{2-5}
 & GAN with Disc. $\phi$ & 0.43 & 0.37  &  0.23 \\ \cline{2-5}
 & AGGLO. & 0.55 & 0.57  & 0.37  \\ \cline{2-5}
 & NMF  & 0.50 & 0.51 & 0.34 \\
 \Xhline{2.0pt}
 \multirow{4}{*}{Fashion-5} & ClusterGAN & {\color{blue}0.73}  & {\color{blue}0.59}  & {\color{blue}0.48} \\ \cline{2-5}  
 & Info-GAN & 0.71 & 0.56  & 0.45 \\ \cline{2-5}
 & GAN with bp & 0.73 & 0.54  & 0.45 \\ \cline{2-5}
 & GAN with Disc. $\phi$ & 0.67 & 0.49  & 0.40 \\ \cline{2-5}
 & AGGLO. & 0.66 & 0.52  & 0.36  \\ \cline{2-5}
 & NMF  & 0.67 & 0.48 & 0.40 \\
 \Xhline{2.0pt}
 \multirow{4}{*}{10x\_73k} & ClusterGAN & {\color{blue} 0.81} & {\color{blue}0.73} & {\color{blue}0.67} \\ \cline{2-5}  
 & Info-GAN & 0.62 & 0.58  & 0.43 \\ \cline{2-5}
 & GAN with bp & 0.65 & 0.59  & 0.45 \\ \cline{2-5}
 & GAN with Disc. $\phi$ & 0.33 & 0.17  &  0.07 \\ \cline{2-5}
 & AGGLO. & 0.63 & 0.58  & 0.40  \\ \cline{2-5}
 & NMF  & 0.71 & 0.69 & 0.53 \\ \cline{2-5}
 & SC  & 0.40 & 0.29 & 0.18 \\
 \Xhline{2.0pt} 
 \multirow{4}{*}{Pendigits} & ClusterGAN & {\color{blue}0.77} &  {\color{blue}0.73}  &  {\color{blue}0.65} \\ \cline{2-5}  
 & Info-GAN & 0.72 & 0.73  & 0.61 \\ \cline{2-5}
 & GAN with bp & 0.76 & 0.71  & 0.63 \\ \cline{2-5}
 & GAN with Disc. $\phi$ & 0.65 & 0.57  & 0.45 \\ \cline{2-5}
 & DCN & 0.72 & 0.69  & 0.56  \\ \cline{2-5}
 & AGGLO. & 0.70 & 0.69  & 0.52  \\ \cline{2-5}
 & NMF  & 0.67 & 0.58 & 0.45 \\ \cline{2-5}
 & SC  & 0.70  & 0.69  &  0.52 \\
 \Xhline{2.0pt}
\end{tabular}
\caption{Comparison of clustering metrics across datasets} 
\label{table:compare_alg}
\end{table} 

\subsection{Datasets}

\textbf{Synthetic Data} The data is generated from a mixture of Gaussians with $4$ components in 2D, which constitutes the $\mathcal{Z}$ space. We generated $2500$ points from each Gaussian. The $\mathcal{X}$ space is obtained by a non-linear transformation : $x = f(\mathbf{U} \cdot f ( \mathbf{W} z))$, where $ \mathbf{W} \in \mathbb{R}^{10 \times 2} , \mathbf{U} \in \mathbb{R}^{100 \times 10}$ with $W_{i,j} \sim \mathcal{N}(0, 1), U_{i,j} \sim \mathcal{N} (0, 1)$. $f(\cdot)$ is the sigmoid function to introduce non-linearity. 

\textbf{MNIST}  It consists of $70$k images of digits ranging from $0$ to $9$. Each data sample is a $28 \times 28$ greyscale image. We used the DCGAN with conv-deconv layers, batch normalization and leaky relu activations, the details of which are available in the Supplementary material. 

\textbf{Fashion-MNIST (10 and 5 classes)} This dataset has the same number of images with the same image size as MNIST, but it is fairly more complicated. Instead of digits, it consists of various types of fashion products. Supervised methods achieve lower accuracy than MNIST on this dataset. For training a GAN, we used the same architecture as MNIST for this dataset. We also merged some categories which were similar to form a separate 5-class dataset. The five groups were as follows : \{Tshirt/Top, Dress\}, \{Trouser\}, \{Pullover, Coat, Shirt\}, \{Bag\},  \{Sandal, Sneaker, Ankle Boot\}.

\textbf{10x\_73k} Even though GANs have achieved unprecedented success in generating realistic images, it is not clear whether they can be equally effective for other types of data. In this experiment, we trained a GAN to cluster cell types from a single cell RNA-seq counts matrix. Moreover, computer vision might have ample supply of labelled images, obtaining labels for some fields,  for instance biology, is extremely costly and laborious. Thus, unsupervised clustering of data is truly a necessity for this domain. The dataset consists of RNA-transcript counts of $73233$ data points belonging to $8$ different cell types \cite{zheng}. To reduce the dimension of the data, we selected $720$ highest variance genes across the cells. The entries of the counts matrix $\mathbf{C}$ are first tranformed as $\log_2(1 + C_{ij})$ and then divided by the maximum entry of the transformation to obtain values in the range of $[0, 1]$. One of the major challenges in this data is sparsity. Even after subselection of genes based on variance, the data matrix was close to $40 \%$ zero entries.

\textbf{Pendigits} It is a very different dataset that consists of a time series of $\{ x_t, y_t \}_{t=1}^T $ coordinates. The points are sampled as writers write digits on a pressure sensitive tablet. The total number of datapoints is $10992$, and consists of $10$ classes, each for a digit. It provided a unique challenge of training GANs for point cloud data.

In all our experiments in this paper, we used an improved variant (WGAN-GP) which includes a gradient penalty \cite{gulrajani}. Using cross-validation for selecting hyperparameters is not an option in purely unsupervised problems due to absence of labels. We adapted standard architectures for the datasets \cite{infogan} and avoided data specific tuning as much as possible. Some choices of regularization parameters $\lambda = 10$, $\beta_n = 10$, $\beta_r = 10$ worked well across all datasets.

\subsection{Evaluation}
Since clustering is an unsupervised problem, we ensured that all the algorithms are oblivious to the true labels unlike a supervised framework like conditional GAN \cite{mirza}. We compared ClusterGAN with other possible GAN based clustering approaches we could conceive. 

\begin{table}
\centering
\begin{tabular}{ |c|c|c|c|c|  }
 \Xhline{2.0pt}
 \textbf{Dataset} & \multicolumn{4}{|c|}{\textbf{Algorithm}} \\
 \cline{2-5}
   & Cluster & WGAN & WGAN & Info \\
   & GAN & (Normal) & (One-Hot) & GAN \\
 \hline
 MNIST & {\color{blue}0.81} & 0.88 & 0.94 & 1.88 \\
 \hline
 Fashion & {\color{blue}0.91} & 0.95 & 6.14 & 11.04 \\
 \Xhline{2.0pt}
\end{tabular}
\caption{Comparison of Frechet Inception Distance (FID) (Lower distance is better)} 
\label{table:compare_fid}
\end{table} 

\begin{table}
\centering
\begin{tabular}{ |c|c|c|c|c|  }
 \Xhline{2.0pt}
 \multicolumn{5}{|c|}{Dataset : MNIST, Algorithm : ClusterGAN} \\
 \multicolumn{5}{|c|}{\textbf{ACC}} \\
 \hline
 $\textbf{K = 7}$ & $\textbf{K = 9}$ & $\textbf{K = 10}$ & $\textbf{K = 11}$ & $\textbf{K = 13}$ \\
 \hline
 0.72 & 0.80 & {\color{blue}0.95} & 0.90 & 0.84\\
 \Xhline{2.0pt}
\end{tabular}
\caption{Robustness to Cluster Number $K$} 
\label{table:robust_K}
\end{table} 

Algorithm \ref{alg:bpd} + K-Means is denoted as ``GAN with bp''. For InfoGAN,  we used $\mathop{{\arg\max}_c} \mathbb{P}(c \mid x)$ as an inferred cluster label for $x$. Further, the features $\phi(x)$ in the last layer of the Discriminator could contain some class-specific discriminating features for clustering. So we used Kmeans on $\phi(x)$ to cluster, denoted as ``GAN with Disc. $\phi$''. We also included clustering results from Non-negative matrix Factorization (NMF) \cite{lee}, Aggolomerative Clustering (AGGLO)  \cite{agglo} and Spectral Clustering (SC). AGGLO with Euclidean affinity score and ward linkage gave best results. NMF had both l-1 and l-2 regularization, initialized with Non-negative Double SVD and used KL-divergence loss. SC had rbf kernel for affinity. We reported normalized mutual information (NMI), adjusted Rand index (ARI), and clustering purity (ACC). Since DCN has been shown to outperform various deep-learning based clustering algorithms, we reported its metrics from the paper \cite{yang} for MNIST and Pendigits. We found DCN to be very sensitive to hyperparameter choice, architecture and learning rates and could not obtain reasonable results from it on the other datasets. But we outperformed DCN results on MNIST and Pendigits dataset \footnote{For all baselines and GAN variants, Table \ref{table:compare_alg} reports metrics for the model with best validation purity from $5$ runs.}. 

Since clustering metrics do not reveal the quality of generated samples from a GAN, we report the Frechet Inception Distance (FID) \cite{heusel} for the image datasets. We found that ClusterGAN achives good clustering without compromising sample quality as shown in Table \ref{table:compare_fid}.

In all datasets, we provided the true number of clusters to all algorithms. In addition, for MNIST, Table \ref{table:robust_K} provides the clustering performance of ClusterGAN as number of clusters is varied. Overestimates do not severely hurt ClusterGAN; but underestimate does. 

\subsection{Scalability to Large Number of Clusters} 
We ran ClusterGAN on Coil-20 ($N = 1440$, $K = 20$) and Coil-100 ($N = 7200, K = 100$) datasets, where $N$ is the number of Data points. ClusterGAN could obtain good clusters even with such high value of $K$. These data sets were particularly difficult for GAN training with only a few thousand data points. Yet, we found similar behavior as MNIST / Fashion-MNIST emerging here as well. Distinct modes generated distinct 3D-objects along with rotations as shown in Figure \ref{coil-mode}. 

\begin{figure*}[ht]
\centering
\begin{subfigure}[b]{0.24\textwidth}
\includegraphics[width=\textwidth]{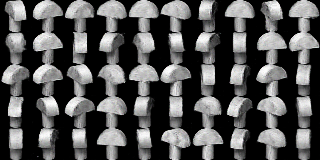} 
\end{subfigure}
\begin{subfigure}[b]{0.24\textwidth}
\includegraphics[width=\textwidth]{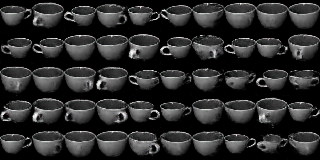} 
\end{subfigure}
\begin{subfigure}[b]{0.24\textwidth}
\includegraphics[width=\textwidth]{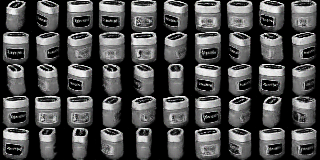} 
\end{subfigure}
\begin{subfigure}[b]{0.24\textwidth}
\includegraphics[width=\textwidth]{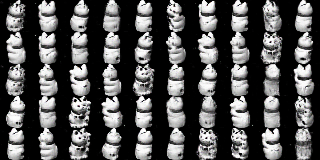} 
\end{subfigure}
\begin{subfigure}[b]{0.24\textwidth}
\includegraphics[width=\textwidth]{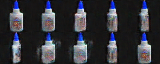} 
\end{subfigure}
\begin{subfigure}[b]{0.24\textwidth}
\includegraphics[width=\textwidth]{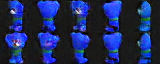} 
\end{subfigure}
\begin{subfigure}[b]{0.24\textwidth}
\includegraphics[width=\textwidth]{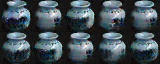} 
\end{subfigure}
\begin{subfigure}[b]{0.24\textwidth}
\includegraphics[width=\textwidth]{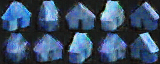} 
\end{subfigure}
\caption{Scalability of ClusterGAN to large number of clusters : Modes of Coil-20 (above) and Coil-100 (below)}
\label{coil-mode}
\end{figure*}

\section{Discussion and Future Work}

In this work, we discussed the drawback of training a GAN with traditional prior latent distributions for clustering and considered discrete-continuous mixtures for sampling noise variables. We proposed ClusterGAN, an architecture that enables clustering in the latent space. Comparison with clustering baselines on varied datasets using ClusterGAN illustrates that GANs can be suitably adapted for clustering. Future directions can explore better data-driven priors for the latent space. Another possibility is to improve results for problems that have a sparse generative structure such as compressed sensing.

\bibliography{Ref}

\begin{thebibliography}{10}

\bibitem{andrew}
Galen Andrew, Raman Arora, Jeff Bilmes, and Karen Livescu.
\newblock Deep canonical correlation analysis.
\newblock In {\em International Conference on Machine Learning}, pages
  1247--1255, 2013.

\bibitem{bengio}
Yoshua Bengio, Aaron Courville, and Pascal Vincent.
\newblock Representation learning: A review and new perspectives.
\newblock {\em IEEE transactions on pattern analysis and machine intelligence},
  35(8):1798--1828, 2013.

\bibitem{bora}
Ashish Bora, Ajil Jalal, Eric Price, and Alexandros~G. Dimakis.
\newblock Compressed sensing using generative models.
\newblock In {\em Proceedings of the 34th International Conference on Machine
  Learning, {ICML} 2017, Sydney, NSW, Australia, 6-11 August 2017}, pages
  537--546, 2017.

\bibitem{infogan}
Xi~Chen, Yan Duan, Rein Houthooft, John Schulman, Ilya Sutskever, and Pieter
  Abbeel.
\newblock Infogan: Interpretable representation learning by information
  maximizing generative adversarial nets.
\newblock In {\em Advances in Neural Information Processing Systems}, pages
  2172--2180, 2016.

\bibitem{coates}
Adam Coates, Andrew Ng, and Honglak Lee.
\newblock An analysis of single-layer networks in unsupervised feature
  learning.
\newblock In {\em Proceedings of the fourteenth international conference on
  artificial intelligence and statistics}, pages 215--223, 2011.

\bibitem{creswell}
Antonia Creswell and Anil~A Bharath.
\newblock Inverting the generator of a generative adversarial network (ii).
\newblock {\em arXiv preprint arXiv:1802.05701}, 2018.

\bibitem{donahue}
Jeff Donahue, Philipp Kr{\"a}henb{\"u}hl, and Trevor Darrell.
\newblock Adversarial feature learning.
\newblock {\em arXiv preprint arXiv:1605.09782}, 2016.

\bibitem{dumoulin}
Vincent Dumoulin, Ishmael Belghazi, Ben Poole, Olivier Mastropietro, Alex Lamb,
  Martin Arjovsky, and Aaron Courville.
\newblock Adversarially learned inference.
\newblock {\em arXiv preprint arXiv:1606.00704}, 2016.

\bibitem{goodfellow}
Ian Goodfellow, Jean Pouget-Abadie, Mehdi Mirza, Bing Xu, David Warde-Farley,
  Sherjil Ozair, Aaron Courville, and Yoshua Bengio.
\newblock Generative adversarial nets.
\newblock In {\em Advances in neural information processing systems}, pages
  2672--2680, 2014.

\bibitem{gulrajani}
Ishaan Gulrajani, Faruk Ahmed, Martin Arjovsky, Vincent Dumoulin, and Aaron~C
  Courville.
\newblock Improved training of wasserstein gans.
\newblock In {\em Advances in Neural Information Processing Systems}, pages
  5769--5779, 2017.

\bibitem{deligan}
Swaminathan Gurumurthy, Ravi~Kiran Sarvadevabhatla, and R~Venkatesh Babu.
\newblock Deligan: Generative adversarial networks for diverse and limited
  data.

\bibitem{halkidi}
Maria Halkidi, Yannis Batistakis, and Michalis Vazirgiannis.
\newblock On clustering validation techniques.
\newblock {\em Journal of intelligent information systems}, 17(2-3):107--145,
  2001.

\bibitem{heusel}
Martin Heusel, Hubert Ramsauer, Thomas Unterthiner, Bernhard Nessler, and Sepp
  Hochreiter.
\newblock Gans trained by a two time-scale update rule converge to a local nash
  equilibrium.
\newblock In {\em Advances in Neural Information Processing Systems}, pages
  6626--6637, 2017.

\bibitem{ilyas}
Andrew Ilyas, Ajil Jalal, Eirini Asteri, Constantinos Daskalakis, and
  Alexandros~G Dimakis.
\newblock The robust manifold defense: Adversarial training using generative
  models.
\newblock {\em arXiv preprint arXiv:1712.09196}, 2017.

\bibitem{ji}
Pan Ji, Tong Zhang, Hongdong Li, Mathieu Salzmann, and Ian Reid.
\newblock Deep subspace clustering networks.
\newblock In {\em Advances in Neural Information Processing Systems}, pages
  23--32, 2017.

\bibitem{adam}
Diederik~P Kingma and Jimmy Ba.
\newblock Adam: A method for stochastic optimization.
\newblock {\em arXiv preprint arXiv:1412.6980}, 2014.

\bibitem{kingma}
Diederik~P Kingma and Max Welling.
\newblock Auto-encoding variational bayes.
\newblock {\em arXiv preprint arXiv:1312.6114}, 2013.

\bibitem{lee}
Daniel~D Lee and H~Sebastian Seung.
\newblock Learning the parts of objects by non-negative matrix factorization.
\newblock {\em Nature}, 401(6755):788, 1999.

\bibitem{lipton}
Zachary~C Lipton and Subarna Tripathi.
\newblock Precise recovery of latent vectors from generative adversarial
  networks.
\newblock {\em arXiv preprint arXiv:1702.04782}, 2017.

\bibitem{maaten}
Laurens van~der Maaten and Geoffrey Hinton.
\newblock Visualizing data using t-sne.
\newblock {\em Journal of machine learning research}, 9(Nov):2579--2605, 2008.

\bibitem{makhzani}
Alireza Makhzani, Jonathon Shlens, Navdeep Jaitly, Ian Goodfellow, and Brendan
  Frey.
\newblock Adversarial autoencoders.
\newblock {\em arXiv preprint arXiv:1511.05644}, 2015.

\bibitem{mika}
Sebastian Mika, Bernhard Sch{\"o}lkopf, Alex~J Smola, Klaus-Robert M{\"u}ller,
  Matthias Scholz, and Gunnar R{\"a}tsch.
\newblock Kernel pca and de-noising in feature spaces.
\newblock In {\em Advances in neural information processing systems}, pages
  536--542, 1999.

\bibitem{mirza}
Mehdi Mirza and Simon Osindero.
\newblock Conditional generative adversarial nets.
\newblock {\em arXiv preprint arXiv:1411.1784}, 2014.

\bibitem{ngspec}
Andrew~Y Ng, Michael~I Jordan, and Yair Weiss.
\newblock On spectral clustering: Analysis and an algorithm.
\newblock In {\em Advances in neural information processing systems}, pages
  849--856, 2002.

\bibitem{radford}
Alec Radford, Luke Metz, and Soumith Chintala.
\newblock Unsupervised representation learning with deep convolutional
  generative adversarial networks.
\newblock {\em arXiv preprint arXiv:1511.06434}, 2015.

\bibitem{santurkar}
Shibani Santurkar, David Budden, and Nir Shavit.
\newblock Generative compression.
\newblock {\em arXiv preprint arXiv:1703.01467}, 2017.

\bibitem{tolstikhin}
Ilya Tolstikhin, Olivier Bousquet, Sylvain Gelly, and Bernhard Schoelkopf.
\newblock Wasserstein auto-encoders.
\newblock {\em arXiv preprint arXiv:1711.01558}, 2017.

\bibitem{vincent}
Pascal Vincent, Hugo Larochelle, Isabelle Lajoie, Yoshua Bengio, and
  Pierre-Antoine Manzagol.
\newblock Stacked denoising autoencoders: Learning useful representations in a
  deep network with a local denoising criterion.
\newblock {\em Journal of Machine Learning Research}, 11(Dec):3371--3408, 2010.

\bibitem{xiang}
Shiming Xiang, Feiping Nie, and Changshui Zhang.
\newblock Learning a mahalanobis distance metric for data clustering and
  classification.
\newblock {\em Pattern Recognition}, 41(12):3600--3612, 2008.

\bibitem{xie}
Junyuan Xie, Ross Girshick, and Ali Farhadi.
\newblock Unsupervised deep embedding for clustering analysis.
\newblock In {\em International conference on machine learning}, pages
  478--487, 2016.

\bibitem{yang}
Bo~Yang, Xiao Fu, Nicholas~D. Sidiropoulos, and Mingyi Hong.
\newblock Towards k-means-friendly spaces: Simultaneous deep learning and
  clustering.
\newblock In {\em Proceedings of the 34th International Conference on Machine
  Learning, {ICML} 2017, Sydney, NSW, Australia, 6-11 August 2017}, pages
  3861--3870, 2017.

\bibitem{agglo}
Wei Zhang, Xiaogang Wang, Deli Zhao, and Xiaoou Tang.
\newblock Graph degree linkage: Agglomerative clustering on a directed graph.
\newblock In {\em European Conference on Computer Vision}, pages 428--441.
  Springer, 2012.

\bibitem{zheng}
Grace~XY Zheng, Jessica~M Terry, Phillip Belgrader, Paul Ryvkin, Zachary~W
  Bent, Ryan Wilson, Solongo~B Ziraldo, Tobias~D Wheeler, Geoff~P McDermott,
  Junjie Zhu, et~al.
\newblock Massively parallel digital transcriptional profiling of single cells.
\newblock {\em Nature communications}, 8:14049, 2017.

\end{thebibliography}

\bibliographystyle{plain}

\newpage

\section{Supplementary Material}

\subsection{Additional Results}

\begin{figure*}[h]
\begin{subfigure}[b]{0.5\textwidth}
\includegraphics[width=\textwidth]{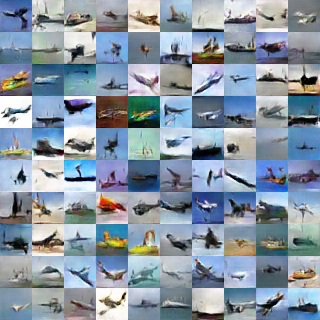} 
\end{subfigure}
\begin{subfigure}[b]{0.5\textwidth}
\includegraphics[width=\textwidth]{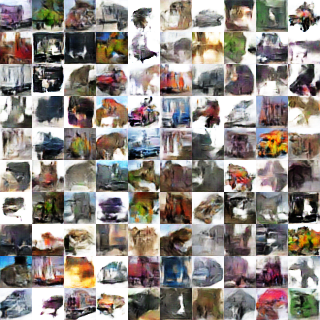} 
\end{subfigure}
\caption{Modes of CIFAR-10 pick up features that easily segregate categories, but may not correspond to dataset labels. Blue background (left) and predominant white background (right).}
\label{cifar10-mode}
\end{figure*}

We also ran ClusterGAN on CIFAR-10, which is a dataset with considerable intra-class variability. For CIFAR-10, the modes of ClusterGAN generate images based on a commonality (like blue background or white background), which may not be in correspondence with the labels. Blue background is present across images in airplane, bird, ship categories. Left to itself, a purely unsupervised model learns a representation that most easily separates the images into categories; but there is no impulsion to enforce higher order semantics of the labels. We believe that maximizing mutual information between an intermediate convolutional layer and the generated image would preserve the high-level semantics. We leave this exploration as future work.

\subsection{Hyperparameter and Architecture Details}

The networks were trained with Adam Optimizer (learning rate $\eta = 1e$-$04$, $\beta_1 = 0.5$, $\beta_2 = 0.9$) for all datasets. The number of discriminator updates was $5$ for each generator update in training. Gradient penalty coefficient for WGAN-GP was set to $10$ for all experiments. The dimension of $z_c$ is the same as the number of classes in the dataset. Most networks used Leaky ReLU activations and Batch Normalization (BN), details for each dataset are provided below. (In the architecture without encoder, Algorithm \ref{alg:bpd} used Adam optimizer to minimize the objective for $5000$ iterations per point.) 

\subsection*{Synthetic Data}

We used batch size = $64$, $z_n$ of $6$ dimensions. LReLU activation with leak = $0.2$ was used. $\beta_n = 10$, $\beta_c = 10$.

\begin{tabular}{|c|c|c|}
\hline
Generator & Encoder & Discriminator \\
\hline
Input $z = (z_n, z_c) \in \mathbb{R}^{10}$  & Input $X \in \mathbb{R}^{16}$   &  Input $X \in \mathbb{R}^{16}$  \\ 
\hline
FC $256$ LReLU BN & FC $256$ LReLU BN &  FC $256$ LReLU BN \\
\hline
FC $256$ LReLU BN & FC $256$ LReLU BN  & FC $256$ LReLU BN \\
\hline
FC $16$ Sigmoid  & FC $10$ linear for $\hat{z}$ & FC $1$ linear  \\
&Softmax on last $4$ to obtain $\hat{z_c}$  &  \\
\hline
\end{tabular}

\subsection*{MNIST and Fashion-MNIST}

We used batch size = $64$, $z_n$ of $30$ dimensions. LReLU activation with leak = $0.2$ was used. $\beta_n = 10$ for MNIST and $\beta_n = 0$ for Fashion-MNIST, $\beta_c = 10$ for both .

\begin{table}[h!]
\centering
\begin{tabular}{|c|c|c|}
\hline
Generator & Encoder & Discriminator \\
\hline
Input $z = (z_n, z_c) \in \mathbb{R}^{40}$  & Input $X \in \mathbb{R}^{28 \times 28}$   &  Input $X \in \mathbb{R}^{28 \times 28}$  \\ 
\hline
FC $1024$ ReLU BN & $4 \times 4$ conv, &  $4 \times 4$ conv, \\
                   & 64 stride 2 LReLU &  64 stride 2 LReLU \\
\hline
FC $7 \times 7 \times 128$ ReLU BN &  $4 \times 4$ conv, & $4 \times 4$ conv, \\
                                   &  128 stride 2 LReLU & 128 stride 2 LReLU \\
\hline
$4 \times 4$ upconv,   & FC $1024$ LReLU & FC $1024$ LReLU  \\
64 stride 2 ReLU BN    &                  &   \\
\hline
$4 \times 4$ upconv,  &FC $40$ linear for $\hat{z}$  &  FC $1$ linear \\
1 stride 2, Sigmoid &Softmax on last $10$ to obtain $\hat{z_c}$  &  \\
\hline
\end{tabular}
\end{table}

For Fashion-MNIST, we used $z_n = 40$. Rest of the architecture remained identical.

\subsection*{10x\_73k}

We used batch size = $64$, $z_n$ of $30$ dimensions. LReLU activation with leak = $0.2$ was used. $\beta_n = 10$, $\beta_c = 10$.

\begin{table}[h!]
\centering
\begin{tabular}{|c|c|c|}
\hline
Generator & Encoder & Discriminator \\
\hline
Input $z = (z_n, z_c) \in \mathbb{R}^{38}$  & Input $X \in \mathbb{R}^{720}$   &  Input $X \in \mathbb{R}^{720}$  \\ 
\hline
FC $256$ LReLU & FC $256$ LReLU  &  FC $256$ LReLU \\
\hline
FC $256$ LReLU & FC $256$ LReLU  & FC $256$ LReLU \\
\hline
FC $720$ Linear  & FC $38$ linear for $\hat{z}$ & FC $1$ linear  \\
&Softmax on last $8$ to obtain $\hat{z_c}$  &  \\
\hline
\end{tabular}
\end{table}

\subsection*{Pendigits}

We used batch size = $64$, $z_n$ of $5$ dimensions. LReLU activation with leak = $0.2$ was used. $\beta_n = 10$, $\beta_c = 10$.

\begin{table}[h!]
\centering
\begin{tabular}{|c|c|c|}
\hline
Generator & Encoder & Discriminator \\
\hline
Input $z = (z_n, z_c) \in \mathbb{R}^{15}$  & Input $X \in \mathbb{R}^{16}$   &  Input $X \in \mathbb{R}^{16}$  \\ 
\hline
FC $256$ LReLU BN & FC $256$ LReLU BN &  FC $256$ LReLU BN \\
\hline
FC $256$ LReLU BN & FC $256$ LReLU BN & FC $256$ LReLU BN\\
\hline
FC $16$ Sigmoid  & FC $15$ linear for $\hat{z}$ & FC $1$ linear  \\
&Softmax on last $10$ to obtain $\hat{z_c}$  &  \\
\hline
\end{tabular}
\end{table}

\subsection*{Coil-20, Coil-100 and CIFAR-10}

For Coil-20, we used batch size = $64$, $z_n$ of $20$ dimensions. For Coil-100, we used batch size = $512$, $z_n$ of $20$ dimensions. For CIFAR-10, we used batch size = $64$, $z_n$ of $50$ dimensions. 

$\beta_n = 10, \beta_c = 10$, LReLU activation with leak = $0.2$ for all datasets.

\begin{table}[h!]
\centering
\begin{tabular}{|c|c|c|}
\hline
Generator & Encoder & Discriminator \\
\hline
Input $z = (z_n, z_c) \in \mathbb{R}^{d_z}$  & Input $X \in \mathbb{R}^{32 \times 32 \times 3}$   &  Input $X \in \mathbb{R}^{32 \times 32 \times 3}$  \\ 
\hline
FC $2 \times 2 \times 448$  & $4 \times 4$ conv, &  $4 \times 4$ conv, \\
       ReLU BN            & 64 stride 2 LReLU    &  64 stride 2 LReLU \\
\hline
$4 \times 4$ upconv,   & $4 \times 4$ conv,       & $4 \times 4$ conv,  \\
256 stride 2 ReLU BN    &  128 stride 2 LReLU BN  &  128 stride 2 LReLU BN\\
\hline
$4 \times 4$ upconv,   & $4 \times 4$ conv,       & $4 \times 4$ conv,  \\
128 stride 2 ReLU BN    &  256 stride 2 LReLU BN  & 256 stride 2 LReLU BN  \\
\hline
$4 \times 4$ upconv,   & $4 \times 4$ conv, & $4 \times 4$ conv,  \\
64 stride 2 ReLU BN    & 512 stride 2 LReLU BN & 512 stride 2 LReLU BN   \\
\hline
$4 \times 4$ upconv,  &FC $d_z$ linear for $\hat{z}$  &  FC $1$ linear \\
3 stride 2, Sigmoid &Softmax on last $K$ to obtain $\hat{z_c}$  &  \\
\hline
\end{tabular}
\end{table}

For InfoGAN, we used the implementation of the authors \url{https://github.com/openai/InfoGAN} for MNIST and Fashion-MNIST. For the other datasets, we used our hyperparameters for Generator and Discriminator and added the Q network (FC 128-BN-LReLU-FC dim $z_c$). For ``GAN with bp'', we used the same Generator and Discriminator hyperparameters as ClusterGAN. Features for ``GAN with Disc. $\phi$'' was obtained from the trained Discriminator of experiments ``GAN with bp''.

\subsection{Reporting Clustering Performance}

In \cite{xie}, the authors ran all algorithms multiple times with a single hyperparamater change and reported the best accuracy. For fair comparison, we used $5$ runs to determine the best model using validation. To be more precise, we first split our datasets into Train, Validation and Test portions. The GAN was trained only on the Train split of the data in an unsupervised manner, sometimes with a single hyper-parameter change such as $\beta_n, z_n$, leak or batch-norm . For each dataset, we saved the model with the best purity on the Validation split from the $5$ runs. Table \ref{table:compare_alg} reports the metrics on the Test split for the saved model. The metrics on the entire dataset for the saved model was either identical upto $2$ decimals or slightly better than Test. But we report only for the Test split, since it has neither been used for training nor for validation runs.

\newpage

\subsection{Generated Modes}


\begin{figure}[h!]
\centering 
\begin{subfigure}[b]{0.30\textwidth}
\includegraphics[width=\textwidth]{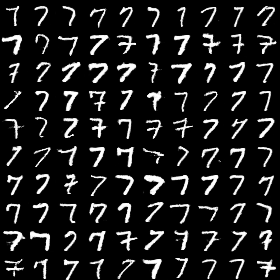} 
\caption{mode $0$}
\end{subfigure}
\begin{subfigure}[b]{0.30\textwidth}
\includegraphics[width=\textwidth]{mnist_mode1_samples.png} 
\caption{mode $1$}
\end{subfigure}
\begin{subfigure}[b]{0.30\textwidth}
\includegraphics[width=\textwidth]{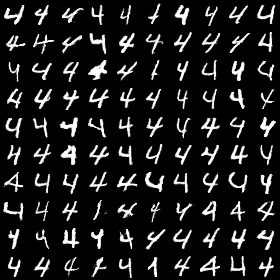} 
\caption{mode $2$}
\end{subfigure}
\begin{subfigure}[b]{0.30\textwidth}
\includegraphics[width=\textwidth]{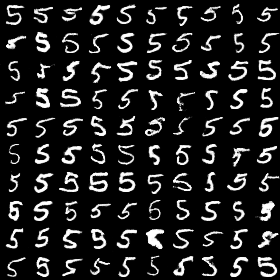} 
\caption{mode $3$}
\end{subfigure}
\begin{subfigure}[b]{0.30\textwidth}
\includegraphics[width=\textwidth]{mnist_mode4_samples.png} 
\caption{mode $4$}
\end{subfigure}
\begin{subfigure}[b]{0.30\textwidth}
\includegraphics[width=\textwidth]{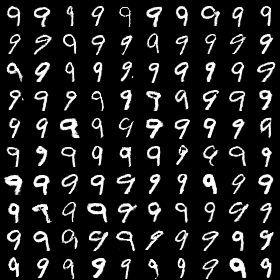} 
\caption{mode $5$}
\end{subfigure}
\begin{subfigure}[b]{0.30\textwidth}
\includegraphics[width=\textwidth]{mnist_mode6_samples.png} 
\caption{mode $6$}
\end{subfigure}
\begin{subfigure}[b]{0.30\textwidth}
\includegraphics[width=\textwidth]{mnist_mode7_samples.png} 
\caption{mode $7$}
\end{subfigure}
\begin{subfigure}[b]{0.30\textwidth}
\includegraphics[width=\textwidth]{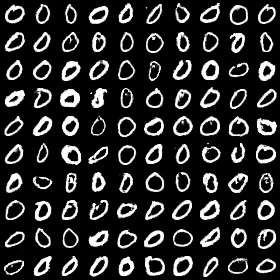} 
\caption{mode $8$}
\end{subfigure}
\begin{subfigure}[b]{0.30\textwidth}
\includegraphics[width=\textwidth]{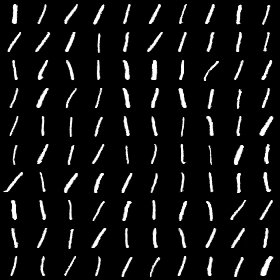} 
\caption{mode $9$}
\end{subfigure}
\caption{Generated digits from distinct modes}
\label{modes_mnist}
\end{figure}

\newpage


\begin{figure}[h!]
\centering 
\begin{subfigure}[b]{0.30\textwidth}
\includegraphics[width=\textwidth]{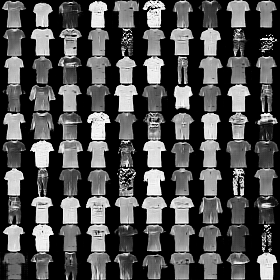} 
\caption{mode $0$}
\end{subfigure}
\begin{subfigure}[b]{0.30\textwidth}
\includegraphics[width=\textwidth]{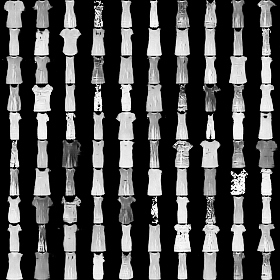} 
\caption{mode $1$}
\end{subfigure}
\begin{subfigure}[b]{0.30\textwidth}
\includegraphics[width=\textwidth]{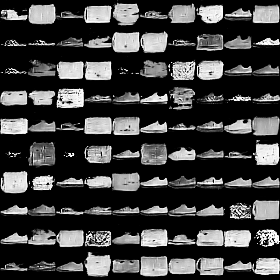} 
\caption{mode $2$}
\end{subfigure}
\begin{subfigure}[b]{0.30\textwidth}
\includegraphics[width=\textwidth]{fashion_mode3_samples.png} 
\caption{mode $3$}
\end{subfigure}
\begin{subfigure}[b]{0.30\textwidth}
\includegraphics[width=\textwidth]{fashion_mode4_samples.png} 
\caption{mode $4$}
\end{subfigure}
\begin{subfigure}[b]{0.30\textwidth}
\includegraphics[width=\textwidth]{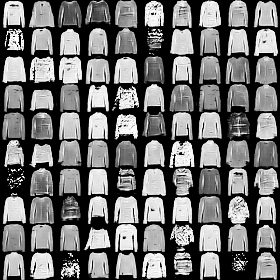} 
\caption{mode $5$}
\end{subfigure}
\begin{subfigure}[b]{0.30\textwidth}
\includegraphics[width=\textwidth]{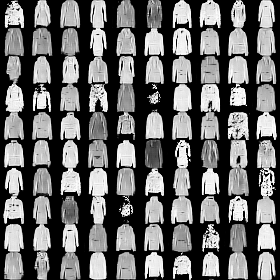} 
\caption{mode $6$}
\end{subfigure}
\begin{subfigure}[b]{0.30\textwidth}
\includegraphics[width=\textwidth]{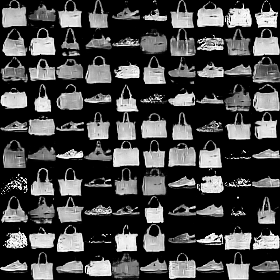} 
\caption{mode $7$}
\end{subfigure}
\begin{subfigure}[b]{0.30\textwidth}
\includegraphics[width=\textwidth]{fashion_mode8_samples.png} 
\caption{mode $8$}
\end{subfigure}
\begin{subfigure}[b]{0.30\textwidth}
\includegraphics[width=\textwidth]{fashion_mode9_samples.png} 
\caption{mode $9$}
\end{subfigure}
\caption{Generated fashion items from distinct modes}
\label{modes_fashion}
\end{figure}

\newpage


\begin{figure}[h!]
\centering 
\begin{subfigure}[b]{0.30\textwidth}
\includegraphics[width=\textwidth]{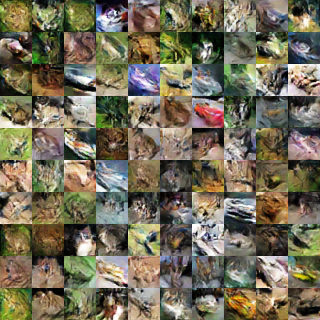} 
\caption{mode $0$}
\end{subfigure}
\begin{subfigure}[b]{0.30\textwidth}
\includegraphics[width=\textwidth]{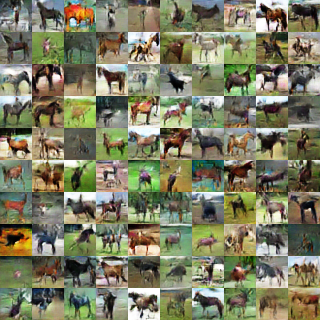} 
\caption{mode $1$}
\end{subfigure}
\begin{subfigure}[b]{0.30\textwidth}
\includegraphics[width=\textwidth]{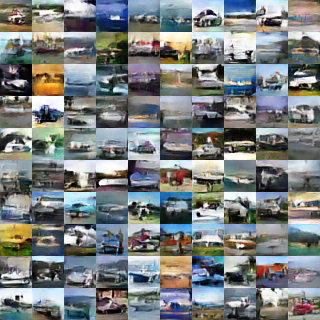} 
\caption{mode $2$}
\end{subfigure}
\begin{subfigure}[b]{0.30\textwidth}
\includegraphics[width=\textwidth]{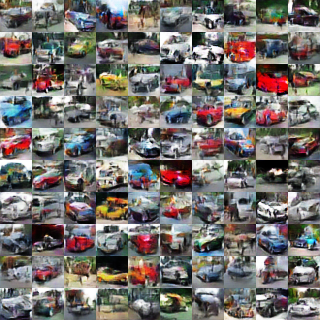} 
\caption{mode $3$}
\end{subfigure}
\begin{subfigure}[b]{0.30\textwidth}
\includegraphics[width=\textwidth]{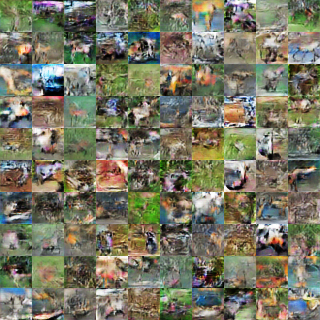} 
\caption{mode $4$}
\end{subfigure}
\begin{subfigure}[b]{0.30\textwidth}
\includegraphics[width=\textwidth]{cifar10_mode5_samples.png} 
\caption{mode $5$}
\end{subfigure}
\begin{subfigure}[b]{0.30\textwidth}
\includegraphics[width=\textwidth]{cifar10_mode6_samples.png} 
\caption{mode $6$}
\end{subfigure}
\begin{subfigure}[b]{0.30\textwidth}
\includegraphics[width=\textwidth]{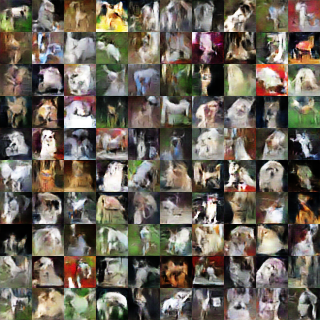} 
\caption{mode $7$}
\end{subfigure}
\begin{subfigure}[b]{0.30\textwidth}
\includegraphics[width=\textwidth]{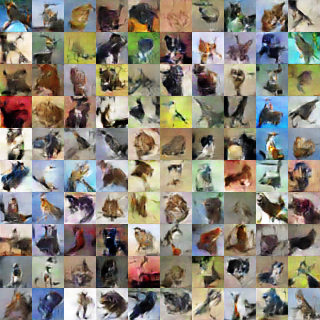} 
\caption{mode $8$}
\end{subfigure}
\begin{subfigure}[b]{0.30\textwidth}
\includegraphics[width=\textwidth]{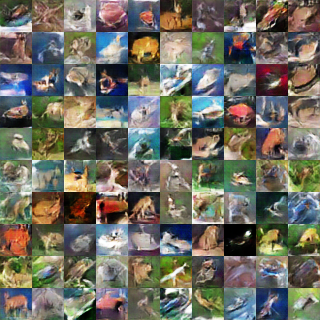} 
\caption{mode $9$}
\end{subfigure}
\caption{Generated categories from distinct modes of CIFAR-10}
\label{modes_fashion}
\end{figure}

\end{document}